\documentclass[11pt]{article}

\usepackage[final]{acl}

\usepackage{times}
\usepackage{latexsym}
\usepackage[T1]{fontenc}
\usepackage[utf8]{inputenc}
\usepackage{microtype}
\usepackage{inconsolata}
\usepackage{graphicx}
\usepackage{booktabs}
\usepackage{amsmath}
\usepackage{amssymb}
\usepackage{amsthm}

\newtheorem{proposition}{Proposition}

\title{The Information Shadow: Measuring Structural Limits on \\ What Language Models Can Learn}

\author{Priyansh Srivastava \\
  Sirena Ai \\
  \texttt{priyansh@sirenatech.com} \\\And
  Romit Chatterjee \\
  Independent Researcher \\
  \texttt{chatterjeeromit86@gmail.com}}

\begin{document}
\maketitle

\begin{abstract}
Some limits on what language models know are not gaps in data coverage but structural properties of learning from text. We introduce the \emph{information shadow}: the region of phenomena that a text-trained learner cannot acquire regardless of scale, comprising (I) structures language cannot express, (II) functions that are statistically non-identifiable from the training distribution, and (III) functions that are representable but unreachable by gradient-based training. We give each type a probe that is decisive because the premise of the shadow is, in that setting, \emph{provable}. For Type~I, Language Compression Residuals compare a \emph{text learner}, which sees only a lossy text-like encoding of the signal, against a \emph{full-signal learner}, which sees the underlying signal directly: the text learner sits at a \emph{computable} expressibility ceiling while the full-signal learner pulls away by a gap that stays flat across $300\times$ more data, so the deficit is a property of the channel, not of training. For Type~II, the Counterfactual Distinction Test trains models on data exactly consistent with two incompatible rules; across a provable string task and a language-like agreement task, behavior on counterfactuals is set by the model's inductive bias, while $5\%$ disambiguating data steers the learned rule \emph{bidirectionally} to either target ($r=\pm1.0$, $p<10^{-10}$). For Type~III, Basin Escape Mapping exhibits a function that is representable at $100\%$ (by hand-construction) yet reached $0\%$ of the time by standard training and instantly from a nearby initialization, with width scaling no help ($p=1.6\times10^{-14}$). Each effect is isolated by a control that rules out a capacity or modality artifact. We release the probe suite and discuss implications for benchmark design, capability auditing, and shadow-aware uncertainty.
\end{abstract}

\section{Introduction}
\label{sec:intro}

Ask a language model (LM) for a limerick and it obliges; ask it for the path of every raindrop in a storm ten years ago and it cannot help. The second failure is unremarkable—missing data. The interesting question is whether there exist limits that persist under \emph{arbitrarily} more data, compute, and architectural refinement.

We argue the answer is yes, and that these limits are heterogeneous: they arise from at least three distinct sources with different signatures and different remedies. We call their union the \emph{information shadow}. A phenomenon lies in the shadow if (\textbf{Type~I}) its structure cannot be adequately carried by symbolic language; (\textbf{Type~II}) multiple incompatible mechanisms induce the same distribution over observable text, so no amount of that text identifies the true one; or (\textbf{Type~III}) the function is representable by the architecture but standard training dynamics do not reach it.

Distinguishing these types matters practically, because each resists a different remedy. A \textbf{Type~I} failure cannot be fixed by more text: it is a property of the channel (language) itself, so a learner with access to a richer, non-linguistic signal pulls ahead by a margin that more text never closes. A \textbf{Type~II} failure cannot be fixed by scaling: it is a property of the data-generating process, not of capacity, so the model falls back on its \emph{inductive bias}---the built-in preferences (from architecture and optimizer) that decide which of several data-consistent rules it adopts. A \textbf{Type~III} failure cannot be fixed by more of the \emph{same} data or more parameters: the function is representable, but the optimization path never reaches it. Conflating these leads to misdirected mitigation and to benchmarks that overstate understanding \citep{geirhos2020shortcut,mccoy-etal-2019-right}.

This paper contributes:
\begin{enumerate}
    \item \textbf{A taxonomy} of three structural limit types with a falsifiable, measurable signature for each (\S\ref{sec:taxonomy}).
    \item \textbf{Language Compression Residuals} for Type~I (\S\ref{sec:lcr}): a text learner is shown to sit at a \emph{computable} expressibility ceiling while a full-signal learner with access to the same information uncompressed pulls away by a gap that does not close with scale, with a coarse control isolating inexpressibility from a generic modality gap.
    \item \textbf{The Counterfactual Distinction Test} for Type~II (\S\ref{sec:cdt}--\ref{sec:results}), built on \emph{provably} non-identifiable training distributions, with controls that steer the learned rule bidirectionally on $5\%$ of the data, validated on a string task and a language-like agreement task.
    \item \textbf{Basin Escape Mapping} for Type~III (\S\ref{sec:bem}), pairing a constructive proof that the target is representable with measured reachability rates under matched interventions (initialization, curriculum, width).
    \item \textbf{Effect sizes, multiplicity-corrected tests, and dose--response curves} throughout, and a released, resumable probe suite (\S\ref{sec:discussion}).
\end{enumerate}

Our experiments use small models on synthetic tasks by design: a separation result requires settings where each shadow's premise is a theorem, not an assumption. Section~\ref{sec:related} situates this against observational studies of the same phenomena in large pretrained models.

\section{Related Work}
\label{sec:related}

\paragraph{Expressivity and the limits of form (Type~I).}
Type~I connects to formal expressivity bounds: self-attention cannot model parity or unbounded hierarchy without growing resources \citep{hahn2020theoretical,raghu2017expressive} (though \citealp{chiang2022overcoming} show finite-precision constructions can recognize parity, so the bound is asymptotic), log-precision transformers lie in $\mathrm{TC}^0$ \citep{merrill2023parallelism}, and surveys map architecture to the Chomsky hierarchy \citep{deletang2023chomsky,strobl2024formal}. The simplicity bias of the parameter--function map \citep{valleperez2019deep} and arguments that meaning cannot be recovered from form \citep{bender-koller-2020-climbing,merrill2021provable} motivate a residual that text cannot close. Language Compression Residuals make this measurable: we compare a text learner against a full-signal learner with a \emph{computable} Bayes ceiling, isolating inexpressibility from a generic modality gap via a coarse control. Our taxonomy reorganizes the classical expressivity/trainability/generalization trichotomy around \emph{what no amount of text can fix}, attaching a falsifiable probe and a control to each axis.

\paragraph{Underdetermination across random seeds (Type~II).}
\citet{mccoy-etal-2020-berts} fine-tuned 100 BERT instances on MNLI: in-distribution accuracy was nearly constant (83.6--84.8\%) while HANS subject--object accuracy ranged from 0.0\% to 66.2\%. \citet{damour2020underspecification} call this \emph{underspecification}; the space of equally-good models is formalized by the Rashomon set \citep{semenova2022existence} and predictive multiplicity \citep{marx2020multiplicity}, and learned representations are identifiable only up to linear transformation \citep{roeder2021linear}. These characterize the \emph{set} of solutions but cannot rule out that a subtle cue identifies the rule, nor steer a learner between solutions. Our CDT removes the confound—non-identifiability holds \emph{by construction} (Prop.~\ref{prop:nonid})—and turns underdetermination into a controllable dial.

\paragraph{Ambiguous training, disambiguating evaluation (Type~II).}
MSGS \citep{warstadt-etal-2020-learning} trains on data consistent with both a linguistic and a surface rule, scores a Matthews-correlation preference in $[-1,1]$, and shows small amounts of disambiguating data (``inoculation''; \citealp{liu-etal-2019-inoculation}) shift the preference. \citet{mccoy-etal-2020-syntax} find the adopted hierarchical-vs-linear rule tracks architecture. On the theory side, \citet{abbe2023curriculum} prove that altering the input distribution provably accelerates parity learning. We adopt MSGS's preference-score convention and extend this line in three ways: provable (not merely empirical) ambiguity; \emph{bidirectional} steering, where the same $5\%$ budget pins \emph{either} rule; and a dose--response curve over the disambiguating fraction. Unlike \citet{abbe2023curriculum}, who establish a one-directional speed-up, we measure the full identifiability response and its inductive-bias dependence behaviorally.

\paragraph{Reachability and the dynamics of SGD (Type~III).}
That representable functions can be unreachable is well studied for parities: \citet{shalev-shwartz2017failures} show gradient methods fail on random parities, \citet{barak2022hidden} show SGD learns $k$-sparse parities near the computational limit with abrupt ``hidden progress,'' and the merged-staircase/leap-complexity results \citep{abbe2022merged,abbe2023leap} characterize which sparse functions two-layer networks reach. Good subnetworks need not be discoverable \citep{frankle2019lottery}, and grokking \citep{power2022grokking} ties reachability to training time and data fraction. Basin Escape Mapping does not re-derive these thresholds; it operationalizes them—pairing a constructive representability proof with reachability rates under matched interventions (init, curriculum, width)—so the representable/reachable gap becomes a reportable quantity, with width scaling shown to be no remedy.

\paragraph{Identifiability theory (Type~II).}
Type~II is a learning-theoretic identifiability problem \citep{valiant1984theory}: distinct hypotheses inducing identical observable distributions cannot be distinguished. Nonlinear ICA is unidentifiable without auxiliary signal \citep{hyvarinen1999nonlinear}, and the causal hierarchy implies counterfactual quantities are generically underdetermined by observational data \citep{pearl2009causality}. Our contribution is not these theorems but a behavioral protocol that turns them into a probe for neural learners.

\section{The Information Shadow}
\label{sec:taxonomy}

Table~\ref{tab:taxonomy} summarizes the three types; each has a distinct \emph{signature}—an observable pattern that distinguishes it from ordinary data scarcity—and a distinct mitigation class.

\begin{table}[t]
\centering
\small
\begin{tabular}{@{}p{0.5cm}p{2.5cm}p{3.6cm}@{}}
\toprule
 & \textbf{Cannot be fixed by} & \textbf{Probe \& signature} \\
\midrule
I & more text & LCR: text plateaus at a computable ceiling; full-signal pulls away \\
\addlinespace
II & more scale & CDT: bias sets the rule; $5\%$ data re-pins it \\
\addlinespace
III & more parameters & BEM: representable but vanilla SGD reaches it $0\%$; near-init rescues \\
\bottomrule
\end{tabular}
\caption{The three information-shadow types. Each resists a different ``just add more'' remedy and is detected by a dedicated probe with a falsifiable signature.}
\label{tab:taxonomy}
\end{table}

\paragraph{Why scaling does not dissolve the shadow.}
Type~I limits are properties of the channel (language), not the learner: if the signal needed for a task is destroyed when reality is compressed into text, no amount of additional text restores it. Type~II limits are properties of the data-generating process: if two mechanisms agree on the support of the training distribution, more samples from that distribution add no information about which is true. Type~III limits can in principle worsen with scale, as richer function classes induce more fragmented optimization landscapes. Scaling shifts \emph{which} phenomena sit in the shadow; it does not empty it.

\section{Type I: Language Compression Residuals}
\label{sec:lcr}

Type~I holds that some structure cannot be carried by text at all, so a full-signal learner beats a text-only learner by a margin that \emph{never closes with scale}---because the channel, not the learner, is the bottleneck. Results appear in \S\ref{sec:results-lcr}.

\paragraph{Design.}
A latent $z\in[0,1]$ generates the target. The \emph{text learner} sees only ``text''---a lossy $B$-bin quantization of $z$, i.e.\ a coarse ``word'' that discards everything below bin resolution---whereas the \emph{full-signal learner} sees $z$ itself, the same information uncompressed. The two learners are otherwise identical (same architecture, optimizer, and budget), so any gap between them is due to the compression of the input, not the learner. The \textbf{fine} target $\mathbf{1}[z_1{>}z_2]$ on near-tied pairs depends on sub-bin structure that quantization destroys; its Bayes-optimal text accuracy is \emph{computable} (cell-wise max-posterior over bin pairs) and plotted as a ceiling. The \textbf{coarse} control $\mathbf{1}[b(z_1){\ge}b(z_2)]$ is a deterministic function of the text---if the method merely detected a modality gap it would fire here too. We train text-only and full-signal MLPs at budgets $300$ to $100$k, $N{=}8$ seeds.

\section{Type II: The Counterfactual Distinction Test}
\label{sec:cdt}

The CDT operationalizes Type~II. Its logic (Figure~\ref{fig:cdt}): (1) construct a training distribution that is \emph{exactly} consistent with two incompatible rules---in our string task, \textsc{copy} and \textsc{reverse}; (2) train many models that fit it (near-)perfectly; (3) probe on counterfactual inputs where the rules diverge—any systematic behavior there cannot have come from the data; (4) run \emph{identifiable controls} in which a small fraction of disambiguating examples is injected, labeled by a planted rule—if behavior then converges to the planted rule (in either direction), the shadow behavior was underdetermined by data, not forced by capacity or architecture. We use two terms throughout: the \textbf{shadow} condition trains on the ambiguous (non-identifiable) data alone, while a \textbf{control} condition adds a small, labeled fraction that makes one rule identifiable, serving as a reference for what the model does when the data \emph{is} informative.

\begin{figure}[t]
  \centering
  \includegraphics[width=\columnwidth]{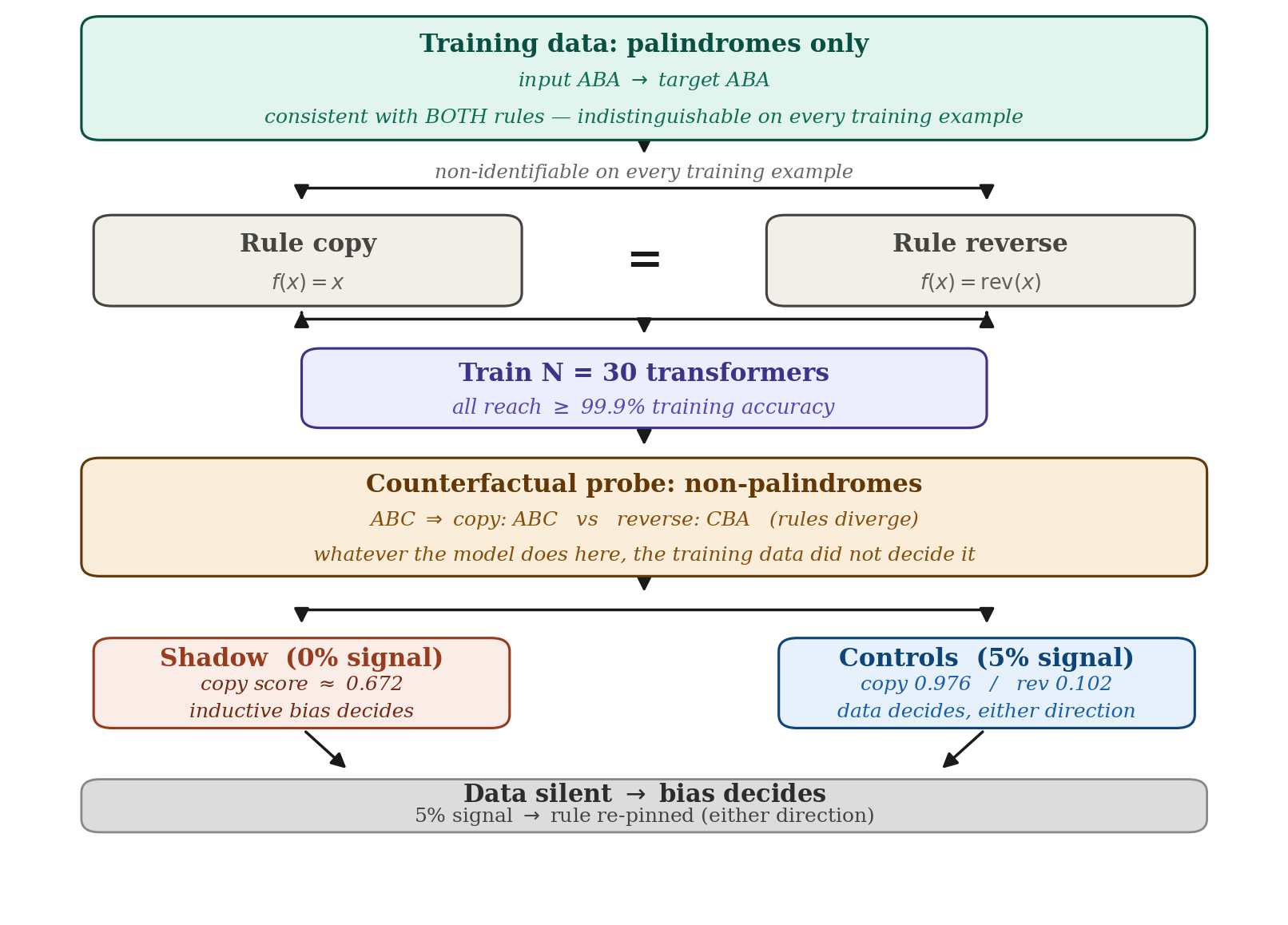}
  \caption{The Counterfactual Distinction Test on the palindrome task. Palindrome-only data is consistent with both \textsc{copy} and \textsc{reverse}; the rules diverge only on the non-palindrome probe. ``$N{=}30$ seeds'' denotes 30 models trained from different random initializations on the same fixed data. Controls inject $5\%$ disambiguating examples and re-pin the rule in either direction.}
  \label{fig:cdt}
\end{figure}

\subsection{Instantiation 1: a provable string task}
\label{sec:palindrome}

Inputs are strings over a $k$-symbol alphabet; the target is a string of the same length. Two candidate rules: $f_{\textsc{copy}}(x)=x$ and $f_{\textsc{rev}}(x)=\mathrm{reverse}(x)$. The training set contains only \emph{palindromes}, labeled with $y=x$.

\begin{proposition}
\label{prop:nonid}
Let $\mathcal{D}$ be any distribution supported on palindromes with targets $y=x$. Then $f_{\textsc{copy}}$ and $f_{\textsc{rev}}$ have identical behavior on every example in the support of $\mathcal{D}$, and hence identical risk under any loss. No learner observing samples from $\mathcal{D}$ can distinguish them.
\end{proposition}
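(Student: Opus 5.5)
The argument is a direct pointwise check followed by two consequences. First, fix any $(x,y)$ in the support of $\mathcal{D}$. By hypothesis $x$ is a palindrome, i.e.\ $x_i = x_{n+1-i}$ for all $i\in\{1,\dots,n\}$, where $n$ is the length of $x$. Hence $\mathrm{reverse}(x)=x$, so
\[
f_{\textsc{rev}}(x)=\mathrm{reverse}(x)=x=f_{\textsc{copy}}(x)=y .
\]
The two rules therefore agree with each other, and with the label, on every supported example. Note that this only needs the support condition, not anything about the weights $\mathcal{D}$ places on individual strings or lengths.

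Second, the risk statement follows for an arbitrary loss $\ell$. The risk is
\[
R(f)=\mathbb{E}_{(x,y)\sim\mathcal{D}}\bigl[\ell(f(x),y)\bigr].
\]
Since $\ell(f_{\textsc{copy}}(x),y)=\ell(f_{\textsc{rev}}(x),y)$ for every $(x,y)$ in the support, the two integrands coincide $\mathcal{D}$-almost surely. So $R(f_{\textsc{copy}})=R(f_{\textsc{rev}})$, and both equal $\ell(y,y)$ in expectation, which is zero for any loss vanishing on the diagonal. The same argument applies to the empirical risk on any sample drawn from $\mathcal{D}$.

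Third, the indistinguishability claim needs to be stated precisely. Formally, a learner sees only an i.i.d.\ sample $S\sim\mathcal{D}^m$. The data-generating process is $\mathcal{D}$ itself, whether the ``true'' mechanism is $f_{\textsc{copy}}$ or $f_{\textsc{rev}}$: labeling palindromes by either rule produces the identical joint distribution over $(x,y)$, because the labels agree pointwise. So the law of $S$ is the same under both hypotheses for every $m$. Consequently, for any (possibly randomized) learner $A$, the output distribution of $A(S)$ is identical under the two hypotheses. In particular, no test based on $S$ can exceed chance at deciding which rule generated the data, since the total variation distance between the two sample laws is $0$.

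The only delicate point is this third step: making ``cannot distinguish'' precise as equality of the induced sample distributions, rather than as a claim about a specific learner. Once that formulation is chosen, the rest is immediate. I would also remark that the two rules differ on any non-palindrome, for example $x=ab$ with $a\neq b$. This shows the rules are genuinely incompatible off-support, so the non-identifiability is not vacuous.
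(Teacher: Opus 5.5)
Your proof is correct and takes the same approach as the paper. The paper's proof is exactly your first step: on a palindrome $\mathrm{reverse}(x)=x$, so $f_{\textsc{rev}}$ and $f_{\textsc{copy}}$ agree pointwise on the support. Your second and third steps (equal risk under any loss, and identical sample laws with total variation zero) spell out consequences that the paper treats as immediate.
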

\begin{proof}
If $x$ is a palindrome, $\mathrm{reverse}(x)=x$, so $f_{\textsc{rev}}(x)=f_{\textsc{copy}}(x)$ pointwise on the support. \end{proof}

At test time we present \emph{non-palindromes}, on which the two rules disagree at every position $i$ with $x_i \neq x_{L-1-i}$ (\emph{disambiguating positions}). Whatever a trained model outputs there was not determined by its training data.

\paragraph{Conditions.}
\textbf{Shadow}: palindromes only (non-identifiable). \textbf{Control-\textsc{copy}} / \textbf{Control-\textsc{rev}}: identical except a fraction $\rho$ of training examples are non-palindromes labeled by the planted rule, making it identifiable; $\rho{=}0.05$ in the main conditions, with a dose--response sweep $\rho \in \{0.01, 0.02, 0.05, 0.10\}$. \textbf{Architecture sweep}: the shadow condition repeated with rotary instead of learned absolute position encodings, testing whether the data-silent choice tracks inductive bias.

\subsection{Instantiation 2: a language-like task}
\label{sec:bridge}

To show the construction is not an artifact of string manipulation, we build the same logic into \emph{subject--verb agreement}, a phenomenon with a long history in LM analysis \citep{linzen2016assessing}. Inputs are noun sequences ending in a verb cue; the model outputs the verb's number. Here the two incompatible rules are \textsc{hierarchical} (agree with the subject, the first noun) and \textsc{linear} (agree with the most recent noun); both fit ordinary training data. In training, all nouns in a sentence share one number, so the rules coincide—non-identifiable. The counterfactual probe inserts an opposite-number \emph{attractor} between subject and cue, where the rules diverge; this mirrors agreement-attraction designs \citep{linzen2016assessing,mccoy-etal-2020-syntax}. Controls plant either rule via $\rho{=}0.05$ disambiguating sentences.

\subsection{Metrics}
\label{sec:metrics}

At each disambiguating position we classify the model's output token as matching the \textsc{copy} target, the \textsc{rev} target, or neither. The \textbf{copy score} $c \in [0,1]$ is the fraction of \textsc{copy} matches among rule-consistent positions ($c{\to}1$: \textsc{copy}; $c{\to}0$: \textsc{rev}; $c{\approx}0.5$: undetermined). For comparability with MSGS we also report the \textbf{preference score} $s = 2c-1 \in [-1,1]$ \citep{warstadt-etal-2020-learning}. \textbf{Rule consistency} is the fraction of disambiguating positions where the output matches \emph{either} rule, verifying the model learned the task rather than emitting noise. The bridge task uses the analogous \textbf{subject score} (1 = \textsc{hierarchical}, 0 = \textsc{linear}).

Across $N$ seeds per condition we report means with bootstrap 95\% CIs, two-sided Mann--Whitney tests between shadow and each control with rank-biserial effect sizes, a Holm--Bonferroni multiple-comparison correction, Levene's test for variance, and an across-seed per-position disagreement entropy (defined in App.~\ref{sec:hyper}) that detects underdetermination whether seeds choose a global rule or resolve positions independently. Definitions of the named statistical tests are given in Appendix~\ref{sec:hyper}.

\subsection{Experimental setup}
\label{sec:setup}

All learners are small decoder-only transformers trained from scratch with next-token loss restricted to output positions (AdamW, cosine schedule, gradient clipping; early stopping at 99.9\% training accuracy). A ``seed'' denotes one training run from a distinct random initialization (and batch ordering); \emph{the training data is held fixed across seeds}, generated once from a fixed RNG, so seeds vary only in initialization and batch order. The counterfactual probe set is likewise fixed, aligning per-position decisions across seeds. The model's \emph{inductive bias}---the preferences introduced by architecture (e.g.\ the position-encoding scheme) and optimizer, rather than by the data---is therefore the only thing that can break the tie in the shadow condition; the architecture sweep makes this concrete by changing only the position encoding. Full hyperparameters are in Appendix~\ref{sec:hyper}. Code, generators, and analysis scripts are available at an anonymized repository.

All conditions use $N{=}30$ seeds (30 runs from different initializations) trained to the $99.9\%$ accuracy criterion; every reported model reached it. Statistics are computed over the 30 per-seed scores per condition.

\section{Type III: Basin Escape Mapping}
\label{sec:bem}

Type~III holds that a function can be \emph{representable} by an architecture yet \emph{unreachable} by standard training. Basin Escape Mapping (BEM) separates the two by construction; its results appear with the others in \S\ref{sec:results-bem}.

\paragraph{Design.}
The target is $k$-sparse parity on $n$ bits. Representability is \emph{proved}, not assumed: we hand-construct exact weights for a two-layer ReLU MLP computing the parity (App.~\ref{sec:bem-construction}), verified at $100\%$ accuracy for every $k$. Reachability is then \emph{measured} as the fraction of random-initialization runs that reach the function under (a) vanilla Adam, against three interventions: (b) initialization near the constructed solution plus Gaussian noise, (c) a curriculum increasing $k$ within the budget, and (d) width scaling. Training uses fresh online batches every step, so failure cannot be attributed to finite data. The reachable control is $k{=}1$, identical pipeline. We use $n{=}80$, $k\in\{1,\dots,10\}$, width $512$, $20$k steps, $N{=}25$ seeds (25 runs from different random initializations).

\section{Results}
\label{sec:results}

We report the three probes in taxonomy order (Type~I, II, III). In every case the shadow condition shows the predicted signature, and a dedicated control rules out a capacity or modality artifact; all effects are large and survive multiplicity correction.

\subsection{Type~I: text sits at a computable ceiling}
\label{sec:results-lcr}

Language Compression Residuals (design in \S\ref{sec:lcr}) compare a \emph{text learner}, which sees only a coarse $B$-bin quantization of a latent $z$, against a \emph{full-signal learner}, which sees $z$ directly. Table~\ref{tab:lcr} reports the result. On the fine target the full-signal learner reaches $0.996$, while the text learner plateaus at $0.741$--$0.746$---at the computed Bayes ceiling of $0.740$ (the small excess is within $N{=}8$ seed noise). The residual gap is $\approx0.25$ and \textbf{flat across two-and-a-half orders of magnitude} of data; a regression of gap on $\log$-budget gives slope $-4\times10^{-4}$ ($p=0.78$), i.e.\ no closure with scale, and the one-sided test that the gap exceeds zero at the largest budget gives $p=4.7\times10^{-4}$. The \textbf{coarse control}---a target that is a deterministic function of the text---\emph{inverts} the gap ($-0.11$: the text learner is now \emph{better}, reaching $1.000$), confirming the fine-task deficit is inexpressibility, not a generic modality disadvantage. The text learner is not undertrained: it is already \emph{optimal} given what the channel can express.

\begin{table}[t]
\centering
\small
\begin{tabular}{@{}lcccc@{}}
\toprule
\textbf{Task} & \textbf{Budget} & \textbf{Text} & \textbf{Full} & \textbf{Gap} \\
\midrule
Fine   & 300     & 0.741 & 0.990 & $+0.249$ \\
Fine   & 3{,}000  & 0.743 & 0.994 & $+0.251$ \\
Fine   & 30{,}000 & 0.744 & 0.994 & $+0.250$ \\
Fine   & 100{,}000 & 0.746 & 0.996 & $+0.250$ \\
\midrule
Coarse & 100{,}000 & 1.000 & 0.894 & $-0.106$ \\
\bottomrule
\end{tabular}
\caption{Language Compression Residuals ($N{=}8$ seeds). Fine-target text accuracy plateaus at the computable Bayes ceiling ($0.740$); the full-vs-text gap is flat in budget (slope $-4\times10^{-4}$, $p=0.78$; one-sided gap $p=4.7\times10^{-4}$). The coarse control reverses the gap, isolating inexpressibility from modality.}
\label{tab:lcr}
\end{table}

\begin{figure}[t]
  \centering
  \includegraphics[width=\columnwidth]{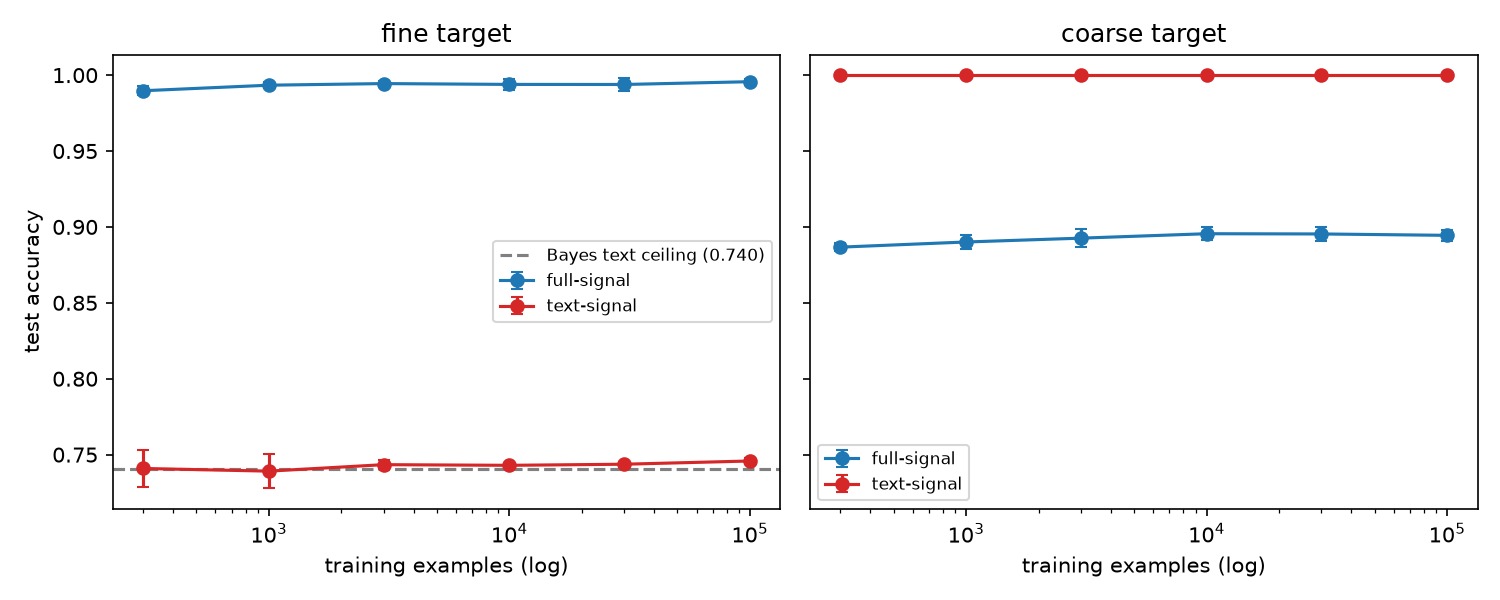}
  \caption{Scaling curves for the fine (left) and coarse (right) targets: full-signal vs.\ text-only accuracy, with the computed Bayes text ceiling (dashed). The fine-target gap is flat across budgets; the coarse gap is absent (text wins).}
  \label{fig:lcr}
\end{figure}

\subsection{Type~II: the identifiability dial}
\label{sec:results-pal}

\paragraph{Palindrome task.}
Table~\ref{tab:cdt}a reports the provable string task. Every condition reaches $\geq 99.7\%$ training accuracy, so capacity and convergence are matched; only identifiability differs. In the \textbf{shadow} condition, models settle at copy score $0.672$ (95\% CI $[0.660, 0.683]$): with the data consistent with both rules, behavior is pulled to an architecture-dependent operating point rather than committing to either rule. The two \textbf{controls} move it to the poles---$5\%$ \textsc{copy}-labeled data gives $0.976$, $5\%$ \textsc{reverse}-labeled data gives $0.102$---a bidirectional, maximal shift: Mann--Whitney $U$ tests give $p=3.0\times10^{-11}$ against each control, rank-biserial $r=+1.0$ and $r=-1.0$, Holm-adjusted $p=6.0\times10^{-11}$.

\paragraph{Dose--response.}
The contrast is not a threshold artifact of the chosen $5\%$. Sweeping $\rho \in \{0, 0.01, 0.02, 0.05, 0.10\}$ yields a smooth, monotone curve: copy score rises $0.672 \to 0.850 \to 0.924 \to 0.976 \to 0.992$, with per-position disagreement entropy falling in lockstep ($0.368 \to 0.025$ bits). Identifiability is a dial, not a switch.

\paragraph{Inductive bias sets the silent default.}
Replacing learned absolute with rotary position encodings shifts the data-silent operating point from $0.672$ to $0.931$ (Mann--Whitney $p=3.0\times10^{-11}$; distributions disjoint). The data is identical and equally non-identifiable in both, so the difference is attributable entirely to inductive bias---the predicted mechanism, made visible by varying only the bias.

\paragraph{Bridge task.}
Table~\ref{tab:cdt}b reports the agreement-attraction task. Every model reaches $100\%$ training accuracy \emph{and} $100\%$ rule consistency, the cleanest possible matched-capacity setting. The shadow condition adopts the \textsc{hierarchical} rule almost universally (subject score $0.997$), reproducing the known hierarchical bias of neural sequence models \citep{mccoy-etal-2020-syntax}---but here we can prove the data did not cause it, since the distribution is exactly consistent with \textsc{linear} too. Planting $5\%$ \textsc{hierarchical} sentences leaves behavior unchanged ($1.000$); planting $5\%$ \textsc{linear} sentences flips it completely ($0.000$, $p=3.0\times10^{-11}$, $r=-1.0$). The asymmetry is informative: signal that \emph{opposes} the bias overrides it, while signal that \emph{agrees} is redundant (Control-\textsc{hier}, $p=0.33$, intentionally not separable). Appendix~\ref{sec:comparison-table} positions these numbers against the closest prior measurements; the novel elements are provable ambiguity (Prop.~\ref{prop:nonid}), bidirectional steering, and the dose--response curve.

\begin{table}[t]
\centering
\small
\begin{tabular}{@{}lccc@{}}
\toprule
\multicolumn{4}{@{}l}{\emph{(a) Palindrome ($N{=}30$ seeds): copy score $\to1$ \textsc{copy}, $\to0$ \textsc{rev}}} \\
\textbf{Condition} & \textbf{Consist.} & \textbf{Copy (95\% CI)} & \textbf{Pref.} $s$ \\
\midrule
Shadow                & 0.83 & $0.672$ $[.660,.683]$ & $+0.34$ \\
Control-\textsc{copy} & 0.97 & $0.976$ $[.973,.979]$ & $+0.95$ \\
Control-\textsc{rev}  & 0.86 & $0.102$ $[.075,.133]$ & $-0.80$ \\
\midrule
\multicolumn{4}{@{}l}{\emph{(b) Bridge ($N{=}30$ seeds): subject score $1$ \textsc{hier}, $0$ \textsc{lin}}} \\
\textbf{Condition} & \textbf{Train} & \textbf{Subj.\ score} & \textbf{Pref.} $s$ \\
\midrule
Shadow                & 1.000 & $0.997$ & $+0.99$ \\
Control-\textsc{hier} & 1.000 & $1.000$ & $+1.00$ \\
Control-\textsc{lin}  & 1.000 & $\mathbf{0.000}$ & $\mathbf{-1.00}$ \\
\bottomrule
\end{tabular}
\caption{CDT results. \textbf{(a)} Palindrome: $5\%$ disambiguating data pins the rule at \emph{either} pole; Mann--Whitney $p=3.0\times10^{-11}$ vs.\ each control, $r=\pm1.0$, Holm-adjusted $p=6.0\times10^{-11}$; all conditions $\geq99.7\%$ train acc. \textbf{(b)} Bridge: same effect in agreement attraction; shadow vs.\ Control-\textsc{lin} $p=3.0\times10^{-11}$, $r=-1.0$. Shadow vs.\ Control-\textsc{hier} is intentionally \emph{not} separable ($p=0.33$): the shadow already adopts the hierarchical rule.}
\label{tab:cdt}
\end{table}

\subsection{Type~III: representable but unreachable}
\label{sec:results-bem}

Table~\ref{tab:bem} reports Basin Escape Mapping (design in \S\ref{sec:bem}). Vanilla Adam reaches the parity with probability $1.0$ for $k\le5$ but collapses at the frontier: $0.04$ at $k{=}6$ and exactly $0.0$ for $k\in\{7,8,9,10\}$---despite the function being representable at $100\%$ throughout (proved by hand-construction, App.~\ref{sec:bem-construction}). Mean steps-to-reach grow explosively up to the cliff ($250\to520\to2{,}200\to10{,}810$ for $k{=}1,2,4,5$), the fingerprint of a vanishing gradient signal \citep{shalev-shwartz2017failures}. At the headline $k{=}9$, the shadow is maximal (reach $0.0$), yet \textbf{initialization near the constructed solution} recovers it in $290$ steps ($100\%$ reach; Fisher exact $p=1.6\times10^{-14}$ vs.\ vanilla) and a curriculum recovers it more slowly ($100\%$, $\sim$19k steps). Crucially, \textbf{scaling width does not help}: at $k{=}9$ both a $4\times$ narrower ($128$) and a $4\times$ wider ($2048$) network reach $0.0$, indistinguishable from vanilla. The basin exists and is trivially reachable from the right start; standard training never finds it, and more parameters do not change that.

\begin{table}[t]
\centering
\small
\begin{tabular}{@{}llcc@{}}
\toprule
\textbf{Condition} & \textbf{$k$} & \textbf{Reach rate (95\% CI)} & \textbf{Steps} \\
\midrule
Hand-built & 1--10 & $1.00$ \emph{(exact)} & 0 \\
\midrule
Vanilla & 5 & $1.00$ $[0.87,1.00]$ & 10{,}810 \\
Vanilla & 6 & $0.04$ $[0.01,0.20]$ & 18{,}250 \\
Vanilla & 7--10 & $\mathbf{0.00}$ $[0.00,0.13]$ & --- \\
\midrule
Near-init & 9 & $\mathbf{1.00}$ $[0.87,1.00]$ & 290 \\
Curriculum & 9 & $1.00$ $[0.87,1.00]$ & 18{,}910 \\
Width 128/2048 & 9 & $0.00$ $[0.00,0.13]$ & --- \\
\bottomrule
\end{tabular}
\caption{Basin Escape Mapping ($n{=}80$, $N{=}25$ seeds). Representable at $100\%$ for all $k$, but vanilla Adam reaches $0\%$ for $k\ge7$. At $k{=}9$, near-initialization recovers $100\%$ in $290$ steps (Fisher $p=1.6\times10^{-14}$) while $4\times$ width scaling in either direction stays at $0\%$. Shadow magnitude (near-init $-$ vanilla) $=1.0$.}
\label{tab:bem}
\end{table}

\section{Discussion}
\label{sec:discussion}

\paragraph{All three signatures hold.}
Each shadow type was given a proof of its premise (a computable expressibility ceiling; provable non-identifiability; constructive representability), a shadow condition with a distinct measured signature (text at the $0.740$ ceiling; copy $0.672$; vanilla reach $0.00$), and a control that rules out a capacity or modality artifact. All effects are large (flat $0.25$ gap; $r=\pm1.0$; magnitude $1.0$) and corrected for multiplicity, and each resists the obvious ``more scale'' remedy by a dedicated control (flat slope $p=0.78$; dose--response; width $128/2048{=}0$).

\paragraph{Conjectures.}
The probes motivate three open conjectures: (\textbf{C1}) for any scale and architecture there exist Type~II functions non-identifiable from text but identifiable given interventional or non-linguistic signal; (\textbf{C2}) there exist representable functions whose basins are so rarely visited under SGD that scale does not raise acquisition probability (BEM's flat $0\%$ across a $16\times$ width range is consistent); and (\textbf{C3}) reducing Type~I shadows via multimodality may enlarge Type~III shadows by fragmenting the landscape.

\paragraph{Implications.}
\textbf{Benchmarks:} held-out accuracy cannot distinguish a learned rule from a reachable shortcut that agrees with it on the test distribution \citep{geirhos2020shortcut,mccoy-etal-2019-right}, suggesting counterfactual probes as a useful complement to capability claims. \textbf{Auditing:} a claimed capability should be verified for both representability and reachability, which BEM shows can diverge completely. \textbf{Safety:} in shadow-affected regions a model need not be right for the right reasons, so confident outputs there should be treated as hypotheses, and shadow-aware uncertainty is a distinct capability worth training for.

\section{Conclusion}

The information shadow reframes a family of language-model limitations as structural rather than incidental, and makes each kind measurable. We validate all three predicted signatures in controlled settings where the premise of each is provable: inexpressibility, where a text learner sits at a computable ceiling while a full-signal learner pulls away by a gap that does not close with scale; non-identifiability, where silent data cedes the rule to inductive bias and $5\%$ of disambiguating data re-pins it in either direction; and unreachability, where a function that is representable at $100\%$ is reached $0\%$ of the time by standard training yet instantly from the right initialization, with width scaling no help. Mapping these shadows in large pretrained models, and turning the probes into auditing tools, is the natural next step.

\section*{Limitations}

Our experiments use small models on synthetic tasks by design: isolating each shadow type requires a setting where its premise is provable, which large naturalistic corpora cannot offer. The three constructions are by design maximally favorable---each premise holds as a theorem---so the quantitative separations bound the phenomena in idealized settings and are not direct estimates of their magnitude in trained LMs; generalization to naturalistic learning is a conjecture the probes are meant to test, not a result we claim. Consequently, the quantitative numbers characterize the \emph{phenomena}, not their extent in frontier models; measuring shadow magnitude in pretrained LMs is future work. The three conjectures remain open, the Type~I/Type~II boundary may be hard to adjudicate for real-world phenomena, and the taxonomy does not claim exhaustiveness.

\bibliography{custom}

\appendix

\section{Hyperparameters, Statistical Tests, and Reproducibility}
\label{sec:hyper}

\paragraph{Named statistical tests.}
We use the following standard tests; definitions are collected here for reference.
\emph{Mann--Whitney $U$ test:} a non-parametric two-sample test of whether one distribution is stochastically shifted relative to another, based on rank sums rather than means; it makes no normality assumption, suiting our bounded scores.
\emph{Rank-biserial correlation $r$:} an effect size for the Mann--Whitney test, equal to the difference between the proportion of cross-group pairs favoring each group; $r=+1$ ($-1$) means every sample in one group exceeds (falls below) every sample in the other, i.e.\ the two distributions are completely separated.
\emph{Holm--Bonferroni correction:} a sequential multiple-comparison adjustment that controls the family-wise error rate; it orders the raw $p$-values and applies progressively less stringent thresholds, more powerful than plain Bonferroni while equally conservative on the smallest $p$.
\emph{Levene's test:} a test for equality of variance between groups, used here to check whether conditions differ in spread and not only in central tendency.
\emph{Fisher's exact test} (used for BEM): an exact test of association in a $2\times2$ table of counts (here, reached vs.\ not-reached across conditions), appropriate when reach rates are at the $0$/$1$ boundary.

\paragraph{Main-run configuration.}
All CDT learners are decoder-only transformers (3 layers, 4 heads, $d_{\text{model}}{=}128$, $d_{\text{ff}}{=}512$; learned absolute position encodings, plus rotary in the architecture sweep). The palindrome task uses a 20-symbol vocabulary ($+4$ special), sequence lengths $8$--$16$, $50{,}000$ training pairs, and a $2{,}000$-input probe set. Training: AdamW (lr $3{\times}10^{-4}$, weight decay $0.01$), cosine schedule with $200$ warmup steps, batch size $256$, up to $4{,}000$ steps with early stopping at train accuracy $\geq 0.999$, $30$ seeds per condition. Training data is generated from a fixed RNG and shared across seeds, so seeds vary only initialization and batch order; the probe set is fixed across all models. The bridge task uses 4 noun lemmas, up to 3 modifiers, $40{,}000$ training sentences, and a $2{,}000$-sentence counterfactual set; all other settings match. Decision vectors (per-position rule choices on the fixed probe set) are saved per seed, enabling exact recomputation of all statistics and figures from released artifacts.

\paragraph{Disagreement entropy.}
The across-seed per-position disagreement entropy is the mean, over counterfactual positions, of the binary entropy of the seeds' \textsc{copy}/\textsc{rev} choices at each position. It is $0$ when all seeds agree at every position and approaches $1$ bit when seeds split evenly, capturing underdetermination whether seeds adopt a global rule or resolve positions independently.

\section{Comparison with Prior Findings}
\label{sec:comparison-table}

\begin{table}[h]
\centering
\small
\begin{tabular}{@{}p{1.9cm}p{2.0cm}p{2.6cm}@{}}
\toprule
\textbf{Study} & \textbf{Prior finding} & \textbf{This work} \\
\midrule
\citet{mccoy-etal-2020-berts}: 100 BERT seeds, MNLI$\to$HANS & OOD acc.\ 0.0--66.2\% despite in-dist.\ 83.6--84.8\% & shadow copy $0.672{\pm}0.03$; controls $0.976$/$0.102$ \\
\addlinespace
\citet{warstadt-etal-2020-learning}: MSGS & MCC preference; inoculation shifts toward linguistic rule & $s$: shadow $+0.34$; controls $\pm$ \emph{bidirectional}; dose--response \\
\addlinespace
\citet{mccoy-etal-2020-syntax}: seq2seq & \% hierarchical tracks architecture & bias-tracked: $0.672$ (abs) vs.\ $0.931$ (rotary) \\
\bottomrule
\end{tabular}
\caption{Relation to prior measurements of underdetermination. Prior ambiguity is empirical; ours is provable (Prop.~\ref{prop:nonid}). Bidirectional steering and the dose--response curve are, to our knowledge, new.}
\label{tab:comparison}
\end{table}

\section{BEM Construction (Representability Proof)}
\label{sec:bem-construction}

For $k$-sparse parity, let $s=\sum_{i=1}^{k} x_i$ be the sum of the relevant bits, an integer in $[0,k]$. For $j=1,\dots,k$ define the ramp $R_j(s)=\mathrm{ReLU}(s-(j{-}1))-\mathrm{ReLU}(s-j)$, which equals $\mathbf{1}[s\ge j]$ at integer $s$. Then $\mathrm{parity}(s)=\sum_{j=1}^{k}(-1)^{j+1}R_j(s)$ equals $s \bmod 2$ at every integer. We realize each $R_j$ with two hidden ReLU units sharing the input row $w=\mathbf{1}[\text{relevant bits}]$ and biases $-(j{-}1)$ and $-j$, with signed output weights, and scale the output to a confident logit $C(2\,\mathrm{parity}-1)$. This requires width $\ge 2k$; unused units are zeroed. The construction is exact at all binary inputs, giving $100\%$ accuracy independent of training (verified empirically for all $k\in\{1,\dots,10\}$).

\section{A2/A3 Configuration}
\label{sec:a2a3-config}

\textbf{BEM (A2):} two-layer ReLU MLP; $n{=}80$ bits; $k\in\{1,\dots,10\}$; width $512$ (sweep $128$, $2048$); Adam, lr $10^{-3}$; online batches of $512$; $20$k steps; reach threshold $99\%$ accuracy on $8192$ held-out samples; near-init noise $\sigma{=}0.05$; curriculum $k:2\!\to\!k$; $N{=}25$ seeds. \textbf{LCR (A3):} two-hidden-layer MLP (width $64$); $B{=}8$ bins; close-pair gap $g{=}0.12$; budgets $\{300,10^3,3{\times}10^3,10^4,3{\times}10^4,10^5\}$; Adam, lr $10^{-3}$; $6$k steps; fixed $20$k-sample test set; $N{=}8$ seeds. Both runners are fully resumable. All $p$-values are two-sided unless noted; BEM uses Fisher's exact test, LCR uses Mann--Whitney (one-sided for the directional gap) and OLS for the slope.

\end{document}